\newcommand{\etal}{\textit{et al.}}
\newtheorem{definition}{Definition}
\newtheorem{proposition}{Proposition}
\DeclareMathOperator*{\argmin}{arg\,min}
\def\BibTeX{{\rm B\kern-.05em{\sc i\kern-.025em b}\kern-.08em
    T\kern-.1667em\lower.7ex\hbox{E}\kern-.125emX}}
\begin{document}

\newcommand\copyrighttext{%
  \footnotesize \textcopyright 2021 IEEE. Personal use of this material is permitted.
  Permission from IEEE must be obtained for all other uses, in any current or future 
  media, including reprinting/republishing this material for advertising or promotional 
  purposes, creating new collective works, for resale or redistribution to servers or 
  lists, or reuse of any copyrighted component of this work in other works. 
 }
\newcommand\copyrightnotice{%
\begin{tikzpicture}[remember picture,overlay]
\node[anchor=south,yshift=10pt] at (current page.south) {\fbox{\parbox{\dimexpr\textwidth-\fboxsep-\fboxrule\relax}{\copyrighttext}}};
\end{tikzpicture}%
}

\title{DL-DDA - Deep Learning based Dynamic Difficulty Adjustment with UX and Gameplay constraints}

\author{\IEEEauthorblockN{Dvir Ben Or}
\IEEEauthorblockA{\textit{Playtika Research} \\
Herzeliya, Israel \\
dvirb@playtika.com}
\and
\IEEEauthorblockN{Michael Kolomenkin}
\IEEEauthorblockA{\textit{Playtika Research} \\
Herzeliya, Israel \\
michaelko@playtika.com}
\and
\IEEEauthorblockN{Gil Shabat}
\IEEEauthorblockA{\textit{Playtika Research} \\
Herzeliya, Israel \\
gils@playtika.com}
}

\maketitle
\copyrightnotice

\begin{abstract}
Dynamic difficulty adjustment ($DDA$) is a process of automatically changing a game difficulty for the optimization of user experience. It is a vital part of almost any modern game. Most existing DDA approaches concentrate on the experience of a player without looking at the rest of the players. We propose a method that automatically optimizes user experience while taking into consideration other players and macro constraints imposed by the game. The method is based on deep neural network architecture that involves a count loss constraint that has zero gradients in most of its support. We suggest a method to optimize this loss function and provide theoretical analysis for its performance. Finally, we provide empirical results of an internal experiment that was done on $200,000$ players and was found to outperform the corresponding manual heuristics crafted by game design experts.
\end{abstract}

\maketitle

\section{Introduction}

Dynamic difficulty adjustment ($DDA$) is a process of automatically changing a game difficulty for the optimization of user experience. The difficulty of a game should be just right so that a player does not get bored when the game is too easy and does not get frustrated when the game is too hard. DDA is usually applied to each player based on the player’s abilities, skills and observed actions \cite{zohaib2018dynamic}. 

The inability of games to offer the right difficulty to everyone is considered one of the main reasons for players discontent~\cite{theoryoffun}. Players need a constant challenge to stay immersed in the game. 

Recently, games have been moving from entertainment to other areas, such as healthcare~\cite{desmet2016participatory} and education~\cite{connolly2012systematic}. A well designed game is more than a way to spend free time and to relax. It might be a doctor's or a teacher's tool. Thus, the ability to dynamically adapt the game difficulty for each player is becoming even more important. 

DDA is a demanding task. Both industry and academia have been working on it for several decades, but it has not been completely resolved~\cite{zohaib2018dynamic}.

There are two major challenges in devising a good DDA method. The first one is a precise formulation of user experience or user engagement. As stated in the first paragraph, DDA is a process that \textit{optimizes} user experience. We have to define the user experience first before we can optimize it. The definition should hold the properties of a loss function if we want to utilize  modern optimization tools~\cite{rosasco2004loss} and should be based on the data available in the game.


The second challenge is interpretability and controllability. While DDA processes usually run automatically, games are managed by humans. The option to monitor and control the DDA output is vital for human operators.


In this work we present a DDA system that addresses the aforementioned challenges. The system focuses on online games with many concurrent users. Specifically, the contribution of our paper is threefold:
\begin{enumerate}
    \item \textbf{UX loss function}. We introduce a novel formulation of user experience. As opposed to previous methods, our formulation leverages not only the experience of the player for whom the difficulty is computed, but also the experience of all other players. It requires that the player difficulty fits both the style of the player and the style of similar players. The formulation is generic and can be utilized in various games. We successfully employ the formulation as a loss function in a neural network that learns how to define the optimal difficulty based on the user state. 
    \item \textbf{Completion rate constraint}. We show how to use completion rate in a neural network. Completion rate is the percentage of players who finish a level or a task. It is often employed as an outside input to control the gameplay.
    
    Neural networks are the standard of modern machine learning. Thus, the ability to incorporate a mathematical constraint in a neural network is important for the application of the constraint to real life problems. Straightforward usage of the completion rate constraint in a neural network is difficult since the constraint is a piece-wise constant function and its gradient is zero almost everywhere. 
    
    We propose to use the completion rate in a variation of projection gradient descent algorithm~\cite{iusem2003convergence}. The algorithm projects the parameters of the neural network onto the feasible set defined by the completion rate constraint. We provide an alternation-based iterative procedure for the projection and give theoretical insights for the convergence of this procedure.
    
    \item \textbf{A real world DDA system}. Finally, we present a DDA system that was tested in an online game with millions of daily users. The system is based on a deep neural network that optimizes the UX loss function mentioned in Contribution (1) under the gameplay constraint (2). 
    We show that the system outperforms manually managed DDA methods.  
\end{enumerate}

The paper continues as following: Section~\ref{sect:relatedwork} depicts the related work. Section~\ref{sect:uxloss} describes the loss function. Section~\ref{sect:economy} explains how to integrate a common gameplay constraint - completion rate - in a neural network based solution. Section~\ref{sect:overview} outlines the general architecture and compares the results of our approach with a manual method. Section~\ref{sec:analysis} provides theoretical analysis including convergence analysis for the method.

\section{Related work}\label{sect:relatedwork}

DDA has been an important research topic for the last several decades~\cite{zohaib2018dynamic}. The research can be roughly divided into three main groups.

The first group searches for the optimal difficulty from the player physical responses. For example, Stein \etal~\cite{stein2018eeg} adjusted the difficulty according to the player EEG response and Wang \etal~\cite{wang2018adjusting}  used facial expressions to infer and adapt the experienced difficulty. While obtaining promising results, those approaches require special environments and are hardly applicable directly to existing games.

The second group concentrates on game states. The methods of that group usually define an ideal number or order of states in the game and adjust the game parameters so that the order is preserved. For instance, Yannakakis and Hallam \cite{yannakakis2007towards} define \textit{the appropriate level of challenge} as the variance of steps required for the game engine to "kill" the player in predator-prey games. The higher the variance, the more interesting the game is. The variance is computed over a set of games. When the difficulty is too small, all the games last long. When the difficulty is large, the games end quickly. When the difficulty is right, some of the games end quickly and some last long. Xue \etal~\cite{xue2017dynamic} optimize the expected number of rounds in the game, while Sekhavat~\cite{sekhavat2017mprl} employs a similar approach, by optimizing the difference between the number of losses and the number of wins of a player in multiple periods. Hamlet system embedded in the \textit{Half-Life} game engine~\cite{hunicke2004,hunicke2005} assumes that the player should move between states according to the flow model. 
The system modifies the difficulty to increase the chance of relevant transitions, relying on observed statistics. Another approach is to maximize the speed of the player progress by a simulation reinforcement learning  mechanism~\cite{togelius2006making} and then apply it to real players.
The game state approaches strive to achieve uniform movement of players through the game. It is an appropriate choice for some games, but a disadvantage for other, where each player may want to advance on her own rate or where the optimal state flow is difficult to define.

The third group deals with player skills. The general idea is that better players should get harder games. The methods of that group predict player's abilities and performance and set the difficulty accordingly. For example, a system for Tower defence combines an estimation of player skills with the enemy potential~\cite{sutoyo2015dynamic}. Zook and Riedl~\cite{zook2012temporal} developed a method for predicting player performance in real time. A stroke rehabilitation system uses partially observable Markov model for estimating the player abilities~\cite{Goetschalckx2010GamesWD}.

The above approaches are capable of personalizing user experience, however they are game specific and do not provide a generic UX definition that can be applied to other games. 

The approach presented here falls into the third category, but in addition to utilizing data of a single player, we exploit the data of all concurrent players. We verify that players with similar styles get similar difficulties. Moreover, to the best of our knowledge, our approach is the only one that offers a way to integrate a global gameplay constraint in the UX optimization process.

\section{UX loss function}
\label{sect:uxloss}
Loss functions are central part of any optimization system. It determines the error that the system minimizes. The proposed loss function is called UX loss function, since it optimizes the quality of user experience, or in other words, minimizes UX error. User experience is complex, since it depends on many factors, which are difficult to define \cite{mccarthy2004technology}. In this paper, the UX loss function is mostly focused on the difficulty and can be thought of as a ``DDA loss function". Yet, for general and theoretical reasons we continue with the term ``UX" throughout the paper. 

\subsection{Terminology and assumptions}

The goal is to set the difficulty $\hat{d_i}$ for each player $i$. We assume that the difficulty for all players is set for the same period of the game. Let's call the period $T$. The duration of the period can vary. The players do not have to participate in the game simultaneously.

We assume that there exists one-to-one mapping between the game difficulty and player performance and that the mapping is known. Technically, it means that we know a one-to-one function $Pd = f(d)$ that maps the difficulty $d$ to a game parameter $Pd$ measurable from the game data. For instance, the difficulty may correspond to the number of objects a player needs to find, amount of levels needed to pass, the strength of the opponent needed to fight or any combination of them. The assumption also means that the actual difficulty $d$ can be computed from the performance as $f^{-1}(Pd)$.



In addition, we assume that players can be clustered into homogeneous groups. The details of the clustering are explained in Section~\ref{sect:overview}.

\subsection{Loss definition}
The loss function combines two components. The first component ensures that the advancement in the game will fit the player personally. The idea is that if the required performance deviates too much from the actual performance, the player will find the game as too hard or too easy. This can be associated with positive experience.
The second component is that the requirements of resembling players should be similar. The intuition is that correctly designed user experience should not change much among players that are comparable to each other. This can be associated with game fairness and algorithm stability.
Mathematically, the loss function is defined as:
\begin{equation}\label{eq:userxpmin}
    \text{UX Loss}(\hat{\mathbf{D}}) = \text{var}(\hat{\mathbf{D}}) + \frac{\alpha}{M}\sum_{i=0}^{M} \left( d_i - \hat{d_i} \right)^2,
\end{equation}
where $d_i$ is the actual difficulty of player $i$, $\hat{d_i}$ is the difficulty we aim to find, $\hat{\mathbf{D}}$ is the set of required difficulties of all players in the cluster of player $i$:
\[
\hat{\mathbf{D}} = \{ \hat{d_0}, \hat{d_1}, \dots \hat{d_M}\},
\]
where $M$ is the size of the cluster and $\alpha$ is a parameter that controls the relative weight of the two parts of the loss function. In our experiments, we gave equal  weights to both parts, i.e. $\alpha=1$. Further investigation can be done in order to determine the affect of $\alpha$ on the actual user experience.



\subsection{Loss Optimization}


Conceptually, the optimization of Equation~\ref{eq:userxpmin} can be thought of as a two-step process. The first step is to predict the actual difficulties $d_i$. This can be done with a neural network. The second step is to optimize the UX Loss given the difficulties. The loss is a convex function and the optimization can be performed with any gradient based method.

We use a single neural network for the two step process above. The UX Loss is used as the loss function of the network. The difficulties are not predicted explicitly. The network learns to set the required difficulties that minimize the loss given the actual difficulties provided in the training process and given the player behaviour of some period $T'$ before the relevant gaming period $T$. In our experiments, we set the duration of $T'$ to the duration of $T$. 
Formally, the network is defined as:
\begin{equation}\label{eq:optimaldiff}
    \hat{\mathbf{D}} = N(\boldsymbol{\Theta}, \mathbf{X}),
\end{equation}
where $N(\boldsymbol{\Theta}, \mathbf{X})$ represents the network, $\boldsymbol{\Theta}$ represents network parameters and $\mathbf{X} \in R^{MxZ}$ represents input features of dimension $Z$. The input features contain the states and the actions of a player in each day during $T'$.  

Algorithm~\ref{alg:projectionmin} summarizes the training procedure of the network:
\begin{algorithm}
    \caption{Train NN to minimize UX Loss}\label{alg:projectionmin}
    \begin{algorithmic}[1]
    \Require 
        \Statex $N(\boldsymbol{\Theta};\mathbf{X})$ - neural network, $\boldsymbol{\Theta}$ - initial network weights,
        $\mathbf{D} = \{d_0, d_1, \dots, d_M\}$ - difficulty per user during $T$, $\mathbf{X}$ - input features during $T'$.  
    \Ensure 
         \Statex $\hat{\mathbf{D}}$ - the set of required difficulties that optimizes Equation~\ref{eq:userxpmin}.
         \Statex $\hat{\boldsymbol{\Theta}}$ - optimized network weights for the required difficulties. 
    \State Apply a stochastic gradient descent to train the network.
    \State \Return $\boldsymbol{\Theta}$
    \end{algorithmic}
\end{algorithm}

At the inference time, the network computes the required difficulties from the input features. More details of the network appear in Section~\ref{sect:overview}.

\section{Completion rate constraint}\label{sect:economy}

In this section we show how to apply the completion rate constraint together with the neural network in Equation~\ref{eq:optimaldiff}.

\paragraph{Completion rate}

Completion rate is a percentage of users that complete a certain goal or a series of goals in a game. The goal might be a task, a level, a mini sub-game or any game feature. Usually the completion rate is high for the first levels (easier ones) and gradually decreases with the advance in game levels. It can be thought of as a parameter that determines how challenging is a game feature.


It seems natural to use completion rate as an optimization constraint for a DDA process in general and the UX loss function defined in Section~\ref{sect:uxloss} in particular. Indeed, it allows to optimize user experience while providing additional gameplay constraints. 

However, employing completion rate directly in an optimization is not trivial. Completion rate is a piece-wise constant function of the game difficulty and, thus, its gradient is zero everywhere except at a finite number of points. For instance, assume that when the difficulty is zero, the completion rate is one hundred percents, i.e. all players complete the given task. Increasing difficulty has no impact on the completion rate until the difficulty is high enough for one player to quit before completing. Then it has no effect again until the following player cannot complete and so on.

Optimization of functions with zero gradients is a complex problem, especially for neural networks. For some problems, it can be solved using Reinforcement Learning~\cite{xie2018environment}. For others, approximations and surrogate losses are used~\cite{grabocka2019learning}. Both solutions introduce their own problems.

Instead, we suggest to use a variation of projected gradient descent. The idea is that the weights of a neural network can be projected onto the subspace where the completion rate constraint holds. The projection is performed at each iteration of the learning process. The projection itself is also an iterative procedure described below.

\paragraph{Problem definition}

Let $P$ be the desired completion rate, $M$ be the number of players, $d_i$ is the actual difficulty (performance) from the training set and $\hat{d_i}$ is the desired difficulty (prediction) for player $i$ as defined in Equation~\ref{eq:userxpmin}. Then, the constraint is defined as:
\begin{equation}
\label{eq:completion}
    \hat{P} \triangleq \frac{1}{M}\sum_{i=1}^{M}{\mathbbm{1}[d_i \geq \hat{d_i}]} = P
\end{equation}
where $\mathbbm{1}[x]$ is an indicator function:
\[
\mathbbm{1}[x] \coloneqq \left\{ 
\begin{array}{ll} 
1 \; \; \text{ if } x \text{ is true} \\ 
0 \; \; \text{ if } x \text{ is false} \\
\end{array}
\right.
\]
and $\hat{P}$ is termed the \textit{achieved} completion rate, which is the number of players who were able to complete the challenge (a ``count" function) divided by the number of players.


\paragraph{Projection}

The goal of the projection is to change the weights of the neural network $N(\boldsymbol{\Theta}, \mathbf{X})$ so that the Constraint~\ref{eq:completion} holds. The projection is performed during training after each time the neural network converges. After the projection, the neural network is trained again from the projection point.

The projection works by making the loss function of $N(\boldsymbol{\Theta}, \mathbf{X})$ roughly proportional to the absolute value of difference $\hat{P} - P$ in Constraint~\ref{eq:completion}. When the difference is positive, the achieved completion rate is higher than the desired completion rate. Hence, the desired difficulties $\hat{d_i}$ should raise. When the difference is negative, $\hat{d_i}$ should be lowered. 

Practically, we saw that good results are obtained when the loss function is equal to the average of the desired difficulties $\hat{d_i}$ at the previous iteration when $\hat{P} - P$ is positive, and to the minus average of $\hat{d_i}$ when it is negative, though other possibilities can be chosen, as long as the shift of the weights is done in the right direction, i.e. to increase or decrease the completion rate.

The average of $\hat{d_i}$ is not guaranteed to be proportional to $\|\hat{P}\|$. For example, only the required difficulty of a single player may raise all the time, keeping $\hat{P}$ constant while changing the average of the required difficulties. However, it is hardly possible in practice, since the neural network is already trained to compute all required difficulties. It is very difficult to change the network weights so that only a few difficulties will be influenced.

Algorithm~\ref{alg:projection_comp} summarizes the projection method. 
\begin{algorithm}[h]
    \caption{Projection to optimize completion rate}\label{alg:projection_comp}
    \begin{algorithmic}[1]
    \Require 
        \Statex $\eta$ - learning rate, $P$ - desired completion rate, $N(\boldsymbol{\Theta};\mathbf{X})$ - neural network, $\boldsymbol{\Theta}$ - initial network weights,
        $\mathbf{D} = \{d_0, d_1, \dots, d_M\}$ - actual difficulty during $T$
    \Ensure 
         \Statex $\hat{\boldsymbol{\Theta}}$ - optimized network weights that achieve $P$
    \Repeat
        \State // \textit{compute model outputs}
        \Statex \; \; \; $\hat{\mathbf{D}} = N(\boldsymbol{\Theta};\mathbf{X})$
        \State // \textit{compute hypothesized completion rate}
        \Statex \; \; \; $\hat{P} = \frac{1}{M}\sum_{i=1}^{M}{\mathbbm{1}[d_i \geq \hat{d_i}]}$ 
        \If{$\hat{P} < P$}
            \State // \textit{the computed rate is higher than desired}
            \Statex \; \; \; \; \; \: $err = \frac{1}{M}\sum_{i=1}^{M} \hat{d_i}$
        \ElsIf {$\hat{P} > P$}
            \State // \textit{the computed rate is lower than desired}
            \Statex \; \; \; \; \; \: $err = -\frac{1}{M}\sum_{i=1}^{M} \hat{d_i}$
        \Else {     }
            \State // \textit{The desired rate $P$ is reached}
            \Statex \; \; \; \; \; \;  \Return $ \boldsymbol{\Theta}$
        \EndIf
        \State // \textit{update model parameters}
        \Statex \; \; \;  $\boldsymbol{\Theta} \gets \boldsymbol{\Theta} - \eta \frac {\partial err}{\partial \boldsymbol{\Theta}}$
    \Until{max iterations or convergence}
        \State \Return $\boldsymbol{\Theta}$
    \end{algorithmic}
\end{algorithm}

\section{System overview}\label{sect:overview}

This section describes how the UX loss (Equation~\ref{eq:userxpmin}) and the completion rate constraint (Equation ~\ref{eq:completion}) are combined into a DDA system for online games with millions of daily users. The system was used in an internal experiment on a specific feature inside a game - and outperformed corresponding heuristic-based manual difficulty settings.

The DDA system consists of two stages. The first one, called \textit{Clustering} divides players into homogeneous groups. The second one creates an iterative mechanism for minimizing the UX loss and applying the completion rate constraint.

\paragraph{Clustering}

The variance part of Equation~\ref{eq:userxpmin} represents the user experience more accurately when the players resemble each other. In general, players may differ significantly. There are players who have just installed the game and there are players who have been in the game for several years. The players may have different tastes, preferences and gaming styles. It makes more sense require similar difficulties for similar users.

We assume that there exists a similarity function $S(p_i, p_j)$ between players $i$ and $j$. The function defines distance between players. The smaller the distance, the more similar the players are to each other. The purpose of the similarity function is to divide players into homogeneous clusters. 

We cluster players with a K-Means algorithm, but any other algorithm would do. We define similarity function as a normalized Euclidean distance between the input features $\mathbf{X}$ from Equation~\ref{eq:optimaldiff}. We note that the precise definition of similarity is unimportant as long as the players are divided in smaller groups with comparable properties.

\paragraph{Iterative optimization}

A single projection of the network weights as described in Section~\ref{sect:economy} is insufficient. When the projection is done, the weights $\boldsymbol{\Theta}$ are altered and the neural network no longer achieves the minimal error. It has to be retrained. The whole procedure is repeated until convergence.


Algorithm~\ref{alg:flow} outlines the flow of the whole system.

\begin{algorithm}
    \caption{Full DDA system}\label{alg:flow}
    \begin{algorithmic}[1]
    \Require
        \Statex $K$ - Number of clusters,
        $P$ - Desired completion rate,
        $N(\boldsymbol{\Theta};\mathbf{X})$ - neural network architecture, $\mathbf{D} = \{d_0, d_1, \dots, d_M\}$ - actual difficulty at $T$
    \Ensure 
        \Statex $\{ \hat{\mathbf{D}}_k \}_{k=1}^K$ - the required difficulty for every player of cluster $k$
        \State Initialize neural network parameters $\boldsymbol{\Theta}_k$ with Xavier \cite{glorot2010understanding}
    \For {$k$ in range$(K)$}
    \State assign $\mathbf{X}$ with the $k^{\text{th}}$ cluster features data set
    \State assign $\mathbf{D}$ with the corresponding actual difficulty
    \Repeat {           } // \textit{alternation cycles}
        \State \label{algstage:uxloss} // \textit{optimize UX loss}
        \Statex \; \; \: \; \; \; update $\boldsymbol{\Theta}_k$ by applying $N(\boldsymbol{\Theta}_k;\mathbf{X})$ to optimize Eq.~\ref{eq:userxpmin}
        \State \label{algstage:completion} // \textit{Project weight to ensure completion rate}
        \Statex \; \; \: \; \; \; update $\boldsymbol{\Theta}_k$ by applying algorithm~\ref{alg:projection_comp}
    \Until{max iterations or convergence}
    \State compute $\hat{\mathbf{D}}_k = N(\boldsymbol{\Theta};\mathbf{X})$
    \EndFor
    \end{algorithmic}
\end{algorithm}

\begin{figure}[ht]
  \centering
    \includegraphics[width=0.47\textwidth]{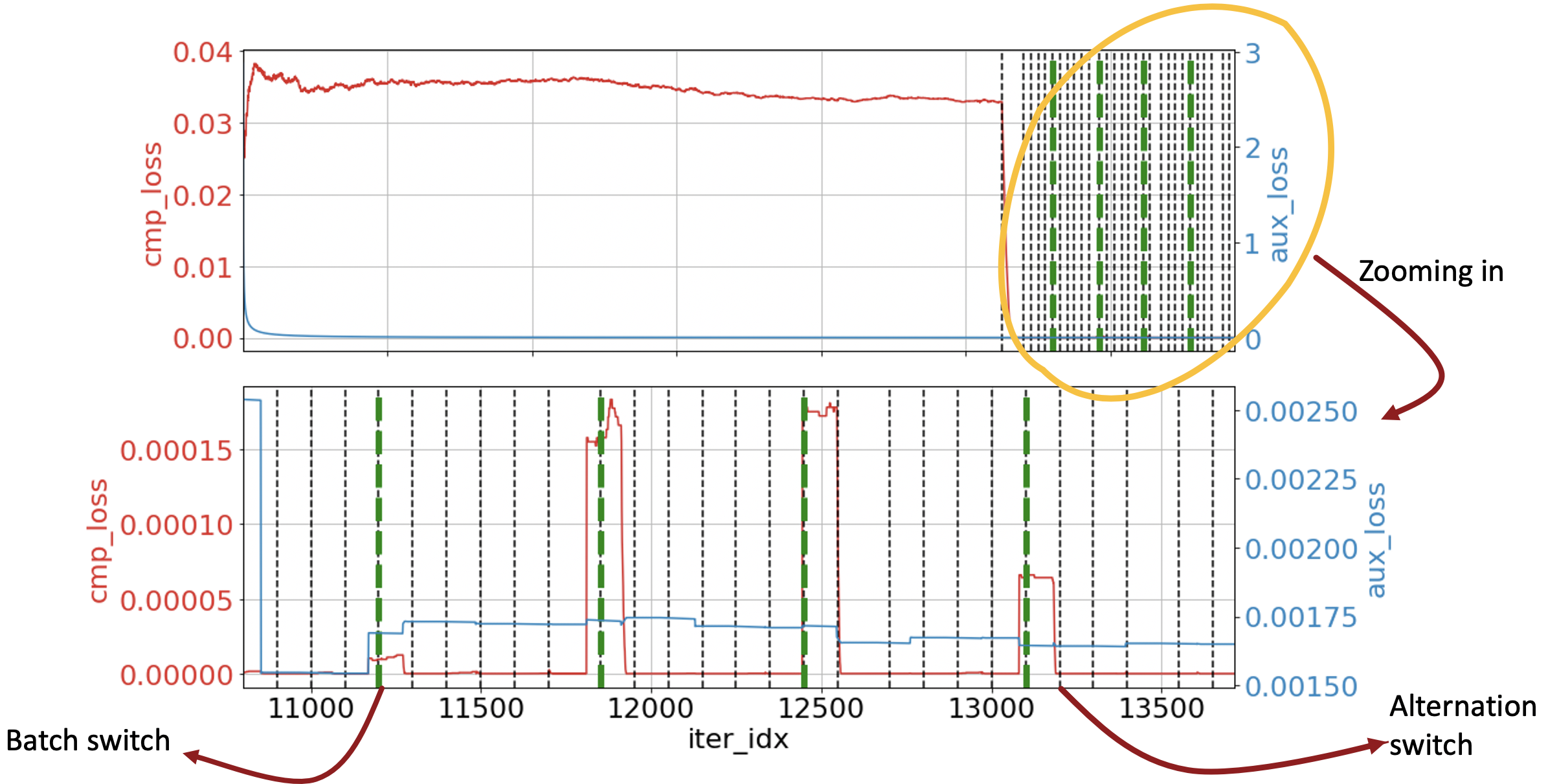}
    \caption{Training procedure illustration. Horizontal axis corresponds to progressing training steps of the model. The red curve (corresponds to left vertical axis) displays the $l_2$ distance between the evaluated completion rate and the desired one, while the blue curve (corresponds to right vertical axis) describes the loss formulated in Eq.~\ref{eq:userxpmin}. Dashed black vertical lines indicate the switch from UX Loss to Completion rate projection and vice versa. The green bold vertical lines indicate the replacement of batch samples used for optimization. For convenience, the lower chart provides a closer view of the optimization trajectory described in the upper one, focused on the stage when the optimized objectives converge.}
    \label{fig:convergence}
\end{figure}

Figure~\ref{fig:convergence} provides an illustration of training procedure and the convergence of our approach for a single cluster. The longest part of the training procedure is the first optimization cycle in which the UX loss is minimized. It can be seen from the lower, zoomed-in part of the Figure, that the system converges both when the batch values change and when the loss switches from UX to projection.

\subsection{Implementation details}

We use a fully connected neural network with 5 hidden layers. The dimension of the input is 40. The input vector has a variety of aggregated player parameters for a two week period before a small mini-game optimized using this approach.

We used 200 clusters and required that the minimal number of players in a cluster is 5,000. The system runs on NVIDIA DGX computer. The whole process for a million of users takes around half an hour.

\subsection{Results}
We performed an A/B test to verify the validity of our approach. The output was compared to our system with the difficulty levels set by a rule based method currently used by game operators. The rule based system is a result of several years of trial and error. It is a collection of \textit{if-else} decisions applied to a variety of game parameters. It incorporates a great deal of knowledge about the game and generally provides satisfactory results.

Our approach outperformed the rule based method as we show below. In addition to being superior in accuracy, our approach is automatic and saves time for game operators. Each change in game mechanism requires manual adaptation of the rule based system. The manual process is time consuming and prone to errors. 

The test was carried out on an eight day mini game (a feature inside one of Playtika's games) where the goal was to optimize the number of points each player has to obtain. About $800,000$ players were in the control group and received rule-based difficulties, while about $200,000$ players were in the test group receiving machine learning-based difficulties.

\begin{table}[h]
\centering
\begin{tabular}{ | c | c | c | }
\hline
 Target & Rule based & Our approach \\ 
 \hline
 8-10\% & 12.0\% & 8.7\% \\
 \hline
\end{tabular}\\
\caption{Comparison of the target completion rate with the result achieved by the rule based method and our approach}\label{tbl:flow}
\end{table}

Table~\ref{tbl:flow} compares the average completion rate achieved by our approach and the rule based method with the target range defined by the product team. The result of the rule based method is fine by the practical standards of the game, but our approach still outperforms it.

\begin{figure}[ht]
  \centering
    \includegraphics[width=0.45\textwidth]{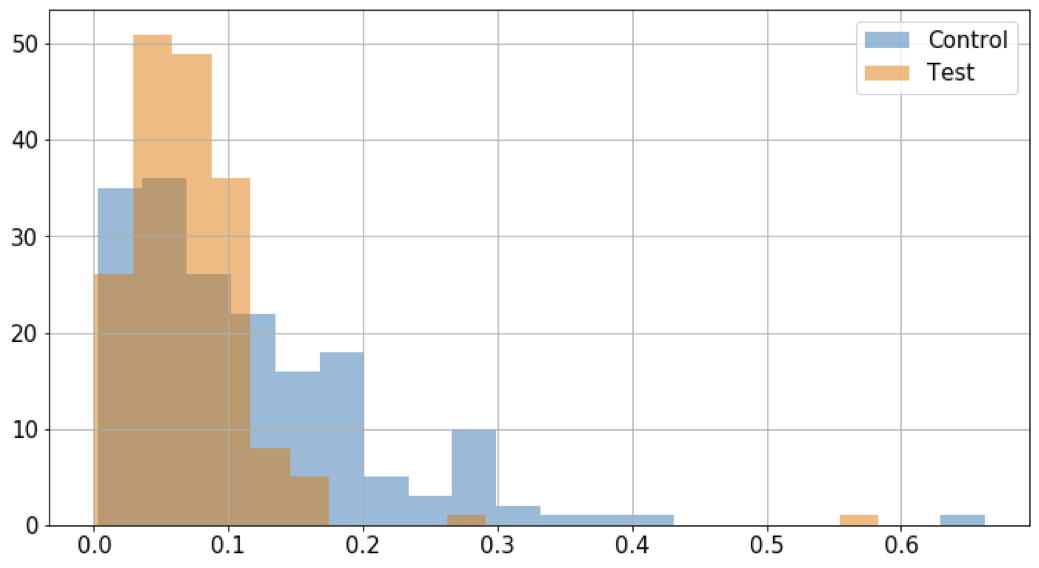}
    \caption{A histogram of completion rate by clusters. X-axis is the completion rate, Y-axis is the number of clusters that achieved the completion rate. Blue is the control group - rule based method, orange is the test group - our system.}
    \label{fig:clusters}
\end{figure}

The difference between the methods is even more pronounced when we look at the distribution of the completion rates. This is where our approach really shines. Figure~\ref{fig:clusters} shows the histograms of the completion rate by clusters. Recall that the clusters are the homogeneous groups of players computed by K-Means algorithm. The variance of the completion rate of the control group is much higher than that of the test group.

There are 200 clusters. The number 200 was chosen since it created homogeneous clusters on one hand and yet contained a relatively large numbers of players (around $1,000$) on the other hand.  In the control group 49 clusters had completion rate higher than 16\%. In the test group, only 5 clusters had the completion rate higher than 16\%. 

The meaning is that while the rule based method achieves satisfactory average completion rate, it fails to achieve it for a large proportion of population. The ability to control the completion rate of every sub group of players is another advantage of our approach.






\section{Theoretical Analysis}
\label{sec:analysis}
This section describes the condition for the convergence of the algorithm and provides some theoretical insights and in a sense the derivation is a bit similar to \cite{shabat2012interest}. The convergence does not assume convexity, but it does assume certain properties for the non-linear projection operators. Those properties depend on the function determined by the structure of the neural net and its loss function. 
The first projection operator returns the nearest local minimum point. 
\begin{definition}
\label{def:ProjMin}
Given a training dataset $\mathbf{X}$, 
a neural net $N(\boldsymbol{\Theta}, \mathbf{X})$ with weights $\boldsymbol{\Theta}$ and a set of local minima $\mathcal{M}$, then  $\mathcal{P_M} N(\boldsymbol{\Theta}, \mathbf{X})$ returns the closest weights of the nearest local minimum:

\begin{equation}
    \mathcal{P_M} N(\boldsymbol{\Theta}, \mathbf{X}) =  \argmin_{N(\boldsymbol{\hat{\Theta}}, \mathbf{X}) \in \mathcal{M}} \Vert \Theta - \hat{\Theta} \Vert_2
\end{equation}

\end{definition}
The second operator returns the closest set of weights, that satisfy the completion rate. Based on Eq. \ref{eq:completion}, it is possible to define the set of valid solutions.

\begin{definition}
\label{def:SetComp}
Let $\mathcal{C}$ be the set of all possible weights that given a set of predicted difficulties $\{d_i\}$, desired completion rate $P$ and tolerance $\delta$ such that
\begin{equation}
\mathcal{C} = \Bigl\{\boldsymbol{\Theta} \vert ~~~ \vert  \frac{1}{M}\sum_{i=1}^{M}{\mathbbm{1}[d_i \geq N(\boldsymbol{\Theta}, \mathbf{X})]} - P \vert \le \delta\Bigr\}  
\end{equation}
\end{definition}

\begin{definition}
\label{def:ProjComp}
Given a training dataset $\mathbf{X}$, 
a neural net $N(\boldsymbol{\Theta}, \mathbf{X})$ with weights $\boldsymbol{\Theta}$ and a set of valid completion points $\mathcal{C}$ (Definition \ref{def:SetComp}), then $\mathcal{P_C} N(\boldsymbol{\Theta}, \mathbf{X})$ returns the closest weights in $\mathcal{C}$:

\begin{equation}
    \mathcal{P_C} N(\boldsymbol{\Theta}, \mathbf{X}) =  \argmin_{N(\boldsymbol{\hat{\Theta}}, \mathbf{X}) \in \mathcal{C}} \Vert \Theta - \hat{\Theta} \Vert_2
\end{equation}

\end{definition}
The operators $\mathcal{P_M}$ and $\mathcal{P_C}$ are approximately implemented by Algorithm \ref{alg:projectionmin} and Algorithm \ref{alg:projection_comp}, respectively. The algorithms return a local minimum ($\mathcal{P_M}$) or a completion-valid point ($\mathcal{P_C}$) by the application of a stochastic gradient descent (or other optimizer), but does not guarantee to return the closest point, since it depends on the structure of the neural network, which is typically high-dimensional non-convnex manifold. This is different than the case in \cite{shabat2012interest}.

Optimizing the UX loss under the completion rate constraint, can be done by the following alternating scheme, which is approximately implemented by Algorithm \ref{alg:flow}:
\begin{equation}
    \label{eq:alt1}
   \Theta_i^M \leftarrow \mathcal{P_M}\Theta_i^C
\end{equation}

\begin{equation}
    \label{eq:alt2}
    \Theta_{i+1}^C \leftarrow \mathcal{P_C}\Theta_i^M
\end{equation}
starting from an arbitrary point (random initialization of the weights).


\begin{proposition}
\label{prop:conv}
Let $\Theta_i^C$ and $\Theta_i^M$ ($i \ge 1$) be a set of points (weights) obtained by a consecutive application of the alternation scheme (Eqs. \ref{eq:alt1} and \ref{eq:alt2}) then the series $\Vert \Theta_i^C - \Theta_i^M \Vert$ converges.
\end{proposition}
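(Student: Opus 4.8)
The plan is to exploit the defining property of the two projection operators---that each returns a \emph{closest} point in its target set---to show that the scalar sequence $a_i \triangleq \Vert \Theta_i^C - \Theta_i^M \Vert$ is monotonically non-increasing and bounded below, whence it converges by the monotone convergence theorem. Crucially, I would not attempt to prove convergence of the iterates $\Theta_i^C$ and $\Theta_i^M$ themselves, which would generally demand convexity of $\mathcal{M}$ and $\mathcal{C}$; recall that $\mathcal{M}$ is the highly non-convex set of local minima of the network loss. The proposition asks only for convergence of the distance, and this weaker statement follows from the closest-point property alone, without any convexity assumption.

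Concretely, I would introduce the auxiliary distance $b_i \triangleq \Vert \Theta_i^M - \Theta_{i+1}^C \Vert$ and establish two inequalities. First, since $\Theta_{i+1}^C = \mathcal{P_C}\Theta_i^M$ is by Definition~\ref{def:ProjComp} the closest point of $\mathcal{C}$ to $\Theta_i^M$, and since $\Theta_i^C$ is itself a member of $\mathcal{C}$ (being the output of $\mathcal{P_C}$ at the previous step), minimality gives $b_i = \Vert \Theta_i^M - \Theta_{i+1}^C \Vert \le \Vert \Theta_i^M - \Theta_i^C \Vert = a_i$. Second, since $\Theta_{i+1}^M = \mathcal{P_M}\Theta_{i+1}^C$ is the closest point of $\mathcal{M}$ to $\Theta_{i+1}^C$ by Definition~\ref{def:ProjMin}, and $\Theta_i^M \in \mathcal{M}$, minimality gives $a_{i+1} = \Vert \Theta_{i+1}^C - \Theta_{i+1}^M \Vert \le \Vert \Theta_{i+1}^C - \Theta_i^M \Vert = b_i$. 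Chaining these yields $a_{i+1} \le b_i \le a_i$, so $(a_i)$ is non-increasing. With monotonicity in hand, I would observe that $a_i \ge 0$ for all $i$, so the sequence is bounded below; a non-increasing sequence bounded below converges to its infimum, which closes the argument. As a by-product the interleaved sequence $(b_i)$ is squeezed to the same limit.

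The only delicate points are bookkeeping rather than conceptual. I must confirm that each comparison point genuinely lies in the relevant set: $\Theta_i^M \in \mathcal{M}$ for every $i \ge 1$ because it is produced by $\mathcal{P_M}$, and $\Theta_i^C \in \mathcal{C}$ for $i \ge 1$ provided the scheme is initialized with $\Theta_1^C \in \mathcal{C}$ (or, harmlessly, from $i \ge 2$ onward otherwise, which does not affect convergence of the tail). I should also note that the argmin in each definition need not be unique; the inequalities above hold for any minimizer, so non-uniqueness is immaterial.

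The main obstacle is therefore not the analysis---which is the classical alternating-projection monotonicity argument---but rather being explicit about \emph{what} converges. The genuine gap between theory and practice, that Algorithms~\ref{alg:projectionmin} and~\ref{alg:projection_comp} only approximate these closest-point projections via stochastic gradient descent, lies outside the scope of the proposition as stated, which concerns the idealized operators $\mathcal{P_M}$ and $\mathcal{P_C}$. The subtlety to communicate clearly is that the result guarantees the \emph{distance} between the two projection families stabilizes, not that the weights settle to a fixed point; the latter would require additional regularity on the non-convex manifold induced by the network.
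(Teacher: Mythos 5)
Your proposal is correct and follows essentially the same argument as the paper: the same two closest-point inequalities (your $b_i \le a_i$ and $a_{i+1} \le b_i$ are exactly the paper's Eqs.~\ref{eq:propeq1} and~\ref{eq:propeq2}), chained to show the distance sequence is non-increasing and bounded below, hence convergent. Your added remarks on set membership, non-uniqueness of the argmin, and the distinction between convergence of the distance versus convergence of the iterates are careful clarifications but do not change the route.
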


\begin{proof}
Since $i \ge 1$, then according to Eq. $\ref{eq:alt2}$, $\Theta_i^C \in \mathcal{C}$ (Def. \ref{def:SetComp}). By the definition of $\mathcal{P_M}$, $\Theta_i^M$ is the closest local minima to $\Theta_i^C$ and by the definition of $\mathcal{P_C}$, $\Theta_{i+1}^C$ is the closest valid count-constraint point to $\Theta_i^M$. Since $\Theta_i^C \in \mathcal{C}$ and $\Theta_{i+1}^C \in \mathcal{C}$ is the closest point to $\Theta_i^M$
\begin{equation}
\label{eq:propeq1}
    \Vert \Theta_i^M - \Theta_{i+1}^C \Vert \le \Vert \Theta_i^M - \Theta_i^C \Vert.
\end{equation}
By the definition of $\mathcal{P_M}$, $\Theta_{i+1}^M \in \mathcal{M}$ is the closest local minima to $\Theta_{i+1}^C$. Since $\Theta_i^M \in \mathcal{M}$
\begin{equation}
\label{eq:propeq2}
    \Vert \Theta_{i+1}^C - \Theta_{i+1}^M \Vert \le \Vert \Theta_i^M - \Theta_{i+1}^C \Vert
\end{equation}
Combining Eqs. \ref{eq:propeq1} and \ref{eq:propeq2} gives
\begin{equation*}
    \Vert \Theta_{i+1}^M - \Theta_{i+1}^C \Vert \le \Vert \Theta_i^M - \Theta_i^C \Vert.
\end{equation*}
Since $\Vert \Theta_i^M - \Theta_i^C \Vert$ is monotonically decreasing and bounded it converges, which completes the proof.
\end{proof}
Proposition \ref{prop:conv} states that the distance between a valid completion rate point and a local minima point is monotonically decreasing and eventually converges. An interesting observation from the proposition is that it tells us where to look for the next minima/valid completion rate point, which enables to decrease the step size of the SGD proportionally to the distance between the two points.

\begin{figure}[h]
  \centering
    \includegraphics[width=0.47\textwidth]{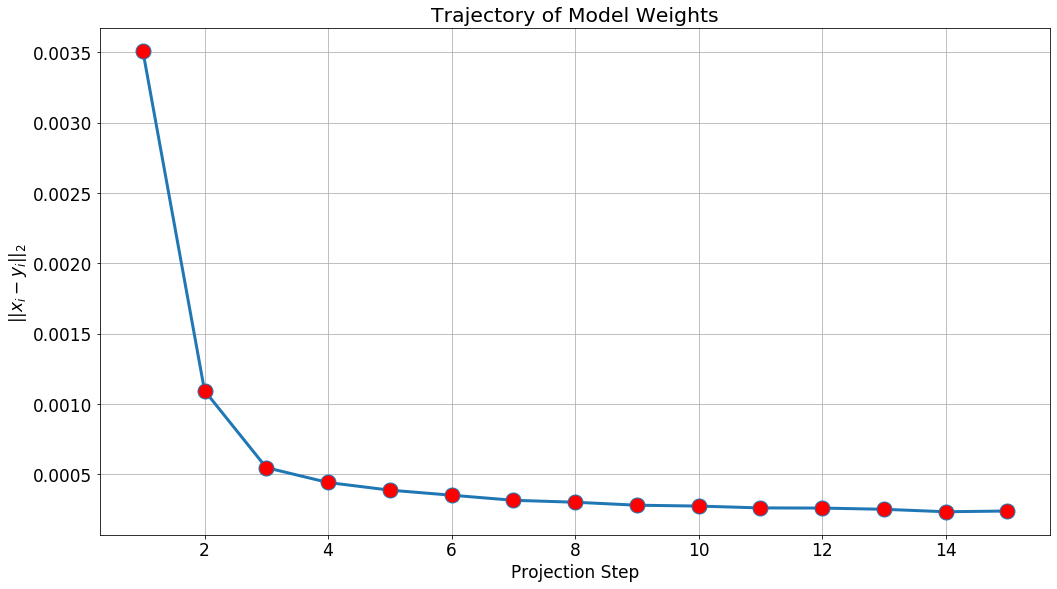}
    \caption{The distance between the weights of a valid completion rate point to the local minima followed by the application of $\mathcal{P_C}$ to it}
    \label{fig:moon_conv}
\end{figure}

Figure \ref{fig:moon_conv} depicts the convergence of Algorithm \ref{alg:flow} to illustrate Proposition \ref{prop:conv} on real data. The figure shows that the distance between a local minima and its corresponding completion rate valid point decreases with each iteration. Interestingly, the algorithm achieves a monotonically decreasing curve even for the last iterations, when the distance is small. This happens even when the projection operators are only approximated and do not satisfy the strict requirement of their definition for finding the closest point.

Additionally, the following observations infer directly from Proposition \ref{prop:conv}:
\begin{itemize}
    \item Since the distance between a valid completion point and a local minimum converges, then eventually it means (excluding pathological cases of points having \emph{exactly} the same distance) that the algorithm iterates between one local minimum and one valid completion rate point. Therefore, it converges to a specific local minimum/completion rate point.
    \item The difference in model's performance between those two points, depends on the distance and the Lipschitz constant of the neural network \cite{fazlyab2019efficient}. So if the distance is small (and hopefully the Lipschitz constant), then stopping in completion rate point or in a local minimum should not make a big difference.
\end{itemize}
For more details the reader is referred to \cite{or2020generalized}.
\section*{Conclusion}

This paper presents a system for dynamic adjustment of game difficulty. The system was tried on an online game with millions of daily users and significantly outperformed manual heuristics used by the game developers. The system is based on several innovations. First, it presents a formulation of user experience that depends on all similar players. The formulation exploits more information than the existing methods.

Second, it shows how to incorporate a completion rate constraint in a neural network. The completion rate constraint is important for creating a fun experience. Straightforward application of the constraint to a neural network is difficult, since the gradients of the constraint are piece-wise zero. The paper also presents a theoretical analysis of the convergence of the neural network.

While the system was applied to a specific game, it is very flexible and can easily be adapted in other games. All that one needs in order to employ our system is a definition of difficulty that can be computed from the game data, such as objects collected, monsters slayed, etc., and a definition of a similarity between players. Then the system can learn the appropriate difficulties for any required period of time.

A possible drawback of our approach is that it does not allow to change the difficulties during the predefined period. If the chosen difficulty is too hard and the player does not advance in the game, it will stay hard. We plan to research how to incorporate real time information into our approach.

\bibliographystyle{IEEEtran}
\bibliography{biblio}

\end{document}